\newcommand{\calF}{{\mathcal{F}}}
\newcommand{\calU}{{\mathcal{U}}}
\newcommand{\calX}{{\mathcal{X}}}
\newtheorem{theorem}{Theorem}
\newtheorem{lemma}{Lemma}
\newtheorem{remark}{Remark}
\newtheorem{corollary}{Corollary}
\title{Complexity, Statistical Risk, and Metric Entropy of Deep Nets Using Total Path Variation}
\author[1]{Andrew R. Barron\thanks{andrew.barron@yale.edu}}
\author[2]{Jason M. Klusowski\thanks{jason.klusowski@rutgers.edu}}
\affil[1]{Department of Statistics and Data Science, Yale University}
\affil[2]{Department of Statistics, Rutgers University \textbf{--} New Brunswick}
\begin{document}

\maketitle

\begin{abstract}%
For any ReLU network there is a representation in which the sum of the absolute values of the weights into each node is exactly $1$, and the input layer variables are multiplied by a value $V$ coinciding with the total variation of the path weights.  Implications are given for Gaussian complexity, Rademacher complexity, statistical risk, and metric entropy, all of which are shown to be proportional to $V$. There is no dependence on the number of nodes per layer, except for the number of inputs $d$.  For estimation with sub-Gaussian noise, the mean square generalization error bounds that can be obtained are of order $V \sqrt{L + \log d}/\sqrt{n}$, where $L$ is the number of layers and $n$ is the sample size.
%
\end{abstract}

\begin{keywords}%
Deep learning; neural networks; supervised learning; nonparametric regression; nonlinear regression; machine learning; high-dimensional data analysis; big data; statistical learning theory; generalization error; metric entropy; Rademacher complexity; Gaussian complexity
\end{keywords}

\vspace{-0.1cm}
\section{Introduction}

We motivate examination of the tools of complexity and metric entropy by considering a  statistical learning network task. From data, estimate a function in a class $\calF$ of functions defined by successive layers of compositions of Lipschitz nonlinearities and $\ell_1$ bounded linear combinations.  The training data  $(X_i,Y_i)$ are independent with distribution $P_{X,Y}$, for observation indices $i=1,\ldots,n$, where the response is $Y_i = f^*(X_i)+\varepsilon_i$, the noise $\varepsilon_i$ is Gaussian (or sub-Gaussian), the inputs $X_i$ are $d$-dimensional, residing in the bounded input domain $[-1,1]^d$ with arbitrary design distribution $P_X$, and there is a known bound on the target function  $|f^*(x)|\!\le\! B$ for all $x$ in that input domain.   The estimator $\hat f_n$ may be defined to minimize the empirical squared error $\sum_{i=1}^n (Y_i\!-\!f(X_i))^2$, or a penalized version there-of, minimizing over choices of $f$ in $\calF$. The statistical loss is the expected square over choices of possible future inputs distributed according to the same $P_X$, yielding the $L_2(P_X)$ squared norm $\|f-f^*\|^2$. 
The statistical risk (generalization error) is the expected statistical loss $r(\hat f_n,f) = E[\|\hat f_n -f^*\|^2]$.

For examination of statistical risk properties in function estimation (such as multi-layer neural nets) standard methods include Gaussian and Rademacher complexity \citep{Bartlett+Mendelson,Shalev-Shwartz+,Neyshabur+,Bartlett+,E+,Golowich+}
and the minimum-description-length principle armed with metric entropy and related cover arguments \citep{Barron+Barron,Barron1991,Barron1994,Barron+1999,Barron+2008, Yang+,Huang+,Klusowski+,Barron+2018}.

Each of these methods produce risk bounds possibly depending on the size of the model and a norm on the parameters and inversely on the sample size.  Results for multi-layer networks as in \citep{Bartlett+,Klusowski+,E+,Golowich+,Barron+2018}
reveal risk bounds which involve the ratio $(\log d)/n$, where the bound depends on either the square root or the cube root of this ratio.  If suitable norm controls are in effect, the dependence on the number of inputs via the logarithm allows for very large numbers of variables and parameters compared to the sample size. 

Methods based on Rademacher complexity and its variants (as in \citep{Bartlett+,E+,Golowich+})
are useful for multi-layer function classes defined via compositions of Lipschitz functions and $\ell_1$ bounded linear combinations. Incuring a factor of $2$ cost for each of the $L$ layers, there are bounds of the form $2^L  ((\log d)/n)^{1/2}$
times bounds on the products of norms on the weight matrices as in \citep{Neyshabur+,Bartlett+}. More recently \citep{Golowich+} uses a variant of Rademacher complexity to reduce the dependence on $L$ to a low order polynomial, with no cost from the numbers of units per layer. 

Metric entropy methods, with covers established by sampling arguments, also yield a low order dependence on $L$ as in \citep{Barron+2018}. The bounds there take the form $\bar V L^{3/2} ((\log d)/n)^{1/2}$, where $\bar V$ is a composite norm on the network weights. The composite variation is an average of the all input and output subnetwork variations. 
An even more sensible norm on the weights would be the total network variation $V$ which is an $\ell_1$ path norm. It is the sum across all paths through the network of the products of the absolute values of the weights along the paths \citep{Neyshabur+}.  Precursors to these multi-layer bounds are the one hidden-layer bounds in \citep{Klusowski+} and \citep{E+}. 
 
Here the risk bounds are improved to be of the form $V ((L+\log d)/n)^{1/2}$, where $V$ is the network variation.  This is related to the work of \citep{Golowich+} who obtained bounds based on exponential variants of Rademacher complexity.  Our main improvement is the recognition that we can use the total variation $V$ of the path weights rather that the product of matrix norms used in \citep{Golowich+}.  Other generalizations here are to allow unbounded sub-Gaussian noise, as well as demonstration of metric entropy and Gaussian complexity bounds as well as Rademacher complexity. 

Impressions arise as to the relative merits of metric entropy and Gaussian/Rademacher complexity methods.  It might seem that the Gaussian/Rademacher complexity approach is superior. Though in principle it should not be possible to out-perform metric entropy methods because of the known relationships to minimax risk as developed in \citep{Yang+} and references sited there-in. 

We build on the information-theoretic methods based on Fano's inequality to produce a clean bound on the metric entropy from ingredients of statistical risk bounds. One implication is that Rademacher/Gaussian complexity measures, like metric entropy, can be used as a penalty ingredient in a minimum empirical risk criterion, as a valid way to endow the criterion with rigorous total description-length interpretation. 

%

The relationship between metric entropy, Gaussian complexity and Rademacher complexity is studied in \citep{Ledoux+Talagrand}, Chapters 3 and 4. The Gaussian complexity is between a Sudakov lower bound and a Dudley integral upper bound, both involving the metric entropy. \citep{Ledoux+Talagrand} addresses whether similar relationships holds for Rademacher complexity in Prop. (4.13) and Cor. (4.14). The lower bound is hampered by requirement of a small diameter for the set being covered or by the presence of an additional factor, logarithmic in the ratio of the dimension and the complexity.  Our analysis traps the statistical risk between expressions involving metric entropy and expressions involving Gaussian complexity.


Section \ref{sec:deep} gives the deep network framework and exposes the fundamental role of the total variation $V$. In Section \ref{sec:complexity} we adapt comparison inequalities for Rademacher complexity and Gaussian complexity to see the implication of the deep network framework for these complexity measures, showing that they are proportional to $V$.
 We use these to produce desired inequalities for multilayer networks.  
These considerations permit risk bounds taking the simple form $V \, ((L+\log d)/n)^{1/2}$, with very modest dependence of the number of layers $L$ and the number of inputs $d$.

Section \ref{sec:entropy} relates the metric entropy for subsets $A$ of $R^n$ to risk bounds for estimating $a$ in $A$ from $Y\sim$ Normal($a,\sigma^2I$). 
The least squares estimator produces upper bounds on the metric entropy of $A$.   
The noise is Gaussian, so that the total Kullback divergence matches the squared Euclidean distance.  Nevertheless, once the inequality is proven, metric entropy penalties and risk bounds can be extracted from complexity measures for estimation with various other error distributions, even when it is not Gaussian.
These considerations are applicable to the function estimation problem with the $a_i$ replaced by $f(X_i)$. 

Finally, Section \ref{sec:risk} uses the tools developed in the previous sections to give risk bounds for deep ReLU networks. Both constrained least squares estimators and penalized least squares estimators (for adaptation to lack of prior knowledge of $V$) are discussed.  Along the way, the complexity-based risk bound of constrained least squares provides a metric entropy evaluation for the deep nets. This knowledge enables determination of a penalty as a specific multiple of $V$ that does indeed provide a criterion with a minimum description-length interpretation and resultant adaptive risk bound.

\section {Deep Nets} \label{sec:deep}

Let $F_{L}$ be the class of $L$ layer deep nets with positive part activation function $\phi(z)= (z)_+$, also called a lower-rectified linear unit (ReLU).  
We follow \citep{Barron+2018} for a helpful organization of the weights of the network.  
It uses $\ell=1$ to denote the outermost layer and $\ell=L$ to denote the innermost layer.

The original variables $x_1,x_2,\ldots,x_d$, each take values in the interval $[-1,1]$ and are inputs to the innermost layer,
 with one of them locked at the value $1$ (for off-sets of the subsequent units), along with their opposite-sign duplicates
$-x_1,-x_2,\ldots,-x_d$, producing an input vector $z_L$ of size $d_L=2d$. 
Then $d_{L-1}$ linear combinations of these are taken followed by the positive part, producing a vector $z_{L-1}$ with coordinates $z_{j_{L-1}} = \phi( \sum_{j_L} w_{j_{L-1},j_L} z_{j_L})$. We provide freedom of signs of the linear combination by the inclusion of the opposite-sign duplicates of the original variables.
This allows restricting the $w_{j_{L-1},j_L}$ to be non-negative.
Likewise, in anticipation of the need to combine these with sign freedom, we double the number of units created using $z_{d_{L-1}+j_{L-1}} = - z_{j_{L-1}}$, that is, using the activation functions which are minus the positive parts.

At the output of this first layer, to have one of the $d_{L-1}$ units we create be locked at $+1$, it is obtained by having its input weights be concentrated on the original variable locked at $1$.  Its duplicate will be concentrated at $-1$.

Continuing in this way, at layer $\ell$ we have inputs $z_{j_\ell}$, including opposite-sign duplicates, and outputs $z_{j_{\ell-1}}= \phi_{j_{\ell-1}} (\sum_{j_\ell} w_{j_{\ell-1},j_{\ell} z_{j_\ell}})$.  For the first half of indices the $\phi_{j_{\ell-1}}$ is the positive part and for the second half it is minus the positive part.  Again units locked at $+1$ (and its duplicate locked at $-1$) are obtained at layer $\ell\!-\!1$ by concentrating their input weights at the the locked unit on the more inward layer $\ell$.

Finally at the output we have $f(w,x) = \sum_{j_1} w_{j_1} z_{j_1}$ or $\phi_{out}(\sum_{j_1} w_{j_1} z_{j_1})$, where $\phi_{out}$ is an optional 
Lipschitz $1$ activation function at the final output, which can be useful to clip the output range to a specified $[-B,B]$.
Again the $z_{j_1}$ appear with duplicates of opposite sign. 

Altogether, in this way, we implement the freedom of signed weights while constraining all the $w_{j_1}$, $w_{j_1,j_2}$, $\ldots,$ $w_{j_{L-1},j_L}$ to be non-negative.  We see that an arbitrary depth $L$ ReLU network, with $d_1$, $d_2$, \ldots $d_L$ nodes on layers $1$ through $L$, is expressed as such a network with all weights non-negative and $d_\ell$ replaced by $2(d_\ell+1)$ to account for offsets and signed duplicates.
  
%

As also explained in \citep{Barron+2018} the use of the (plus or minus) positive parts, allows repeated use of their positive homogeneity, to move weights inward to the innermost layer, reorganize them, and then parcel them back out to the respective layers in normalized form, without changing the function represented. The only stipulation is that when moving weights inward we need to keep track of their path indices.  Indeed, $f(w,x)$ is
$$\phi_{out} ( \sum_{j_1} w_{j_1} \phi_{j_1} (\sum_{j_2} w_{j_1,j_2} \cdots \phi_{j_{L-1}}( \sum_{j_L=1}^{2d} w_{j_{L-1},j_L} x_{j_L}) \cdots ))$$
which may be written as
$$  \phi_{out} ( \sum_{j_1} \phi_{j_1} (\sum_{j_2}  \cdots \phi_{j_{L-1}}( \sum_{j_L=1}^{2d} w_{j_1,j_2,\ldots,j_L} x_{j_L}) \cdots ))$$
where the path weights $w_{j_1,j_2,\ldots,j_L}$ are  $w_{j_1}  w_{j_1,j_2} \cdots w_{j_{L-1},j_L}$.
Let $V=V(w)$ be the variation (also called the $\ell_1$ norm) of these path weights $V=  \sum_{j_1,j_2,\ldots,j_L}w_{j_1,j_2,\ldots,j_L}$.
Moreover, let $a = w/V$ provide probabilities assigned to the paths obtained from the normalized weights.
This $a_{j_1,j_2,\ldots,j_L}$ has a Markov factorization
$a_{j_1} a_{j_2|j_1} \cdots a_{j_L|j_{L-1}}$. 

Care is needed in interpreting this Markov representation of the weights $a_{j_\ell|j_{\ell-1}}$.  Fixing $j_{\ell-1}$, they are not proportional to $w_{j_{\ell-1},j_\ell}$ in general.  But rather they are proportional to  $w_{j_{\ell-1},j_\ell} V_{j_\ell}$ where $V_{j_\ell} = \sum_{j_{\ell+1},\ldots,j_L} w_{j_\ell,j_{\ell+1}} w_{j_{\ell+1},j_{\ell+2}} \cdots w_{j_{L-1},j_L}$
is the subnetwork variation of the network that ends at node $j_\ell$ at layer $\ell$. 

Now with this Markov representation we have $f(w,x)=f(aV,x)=f(a,Vx)$ 
and we can distribute the weights back out in the form
$$\phi_{out} (  \sum_{j_1 }a_{j_1} \phi_{j_1} (\sum_{j_2} a_{j_2|j_1} \cdots \phi_{j_{L-1}}( \sum_{j_L=1}^{2d}   a_{j_L|j_{L-1}} V x_{j_L}) \cdots )),$$
where again we have used the positive homogeneity of the activation functions $+(z)_+$ and $-(z)_+$.

This proves the following theorem about the representation of arbitrary deep ReLU networks.

\begin{theorem} \label{thm:relu}
Any deep network with ReLU activation can be written as a network with probabilistic weights summing to $1$ over the inputs to any unit, with the original inputs 
all multiplied by the overall variation $V$.
\end{theorem}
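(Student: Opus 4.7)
The statement is an algebraic/structural fact about ReLU nets, so the plan is to exhibit an explicit rewriting rather than to produce any quantitative inequality. My plan follows the informal derivation already set up in the preceding paragraphs, but organized as three clearly separated reduction steps.

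First, I would reduce to an all-nonnegative-weights model. Using the device described before the theorem, I would duplicate every hidden unit with an opposite-sign copy produced by the activation $-(z)_+$, and duplicate every input coordinate $x_i$ by $-x_i$. Because $(z)_+$ and $-(z)_+$ differ only in sign, any signed linear combination $\sum_j w_{j_{\ell-1},j_\ell}\, z_{j_\ell}$ feeding a ReLU can be written with $w_{j_{\ell-1},j_\ell}\geq 0$ by choosing whether each input to the combination comes from a unit or from its opposite-sign twin; locked-at-$1$ units (for biases) and their $-1$ twins are handled the same way. This step produces a network computing the same function $f(w,x)$ but with all $w\geq 0$ and sizes $d_\ell$ inflated to $2(d_\ell{+}1)$.

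Second, I would exploit the positive homogeneity $\phi(cz)=c\,\phi(z)$ for $c\ge 0$, which holds both for $(z)_+$ and $-(z)_+$, to slide the edge weights inward along every path. Iterating from the output to the input gives the path expansion
\[
f(w,x)=\phi_{\rm out}\!\left(\sum_{j_1}\phi_{j_1}\!\left(\sum_{j_2}\cdots \phi_{j_{L-1}}\!\left(\sum_{j_L} w_{j_1,\ldots,j_L}\, x_{j_L}\right)\cdots\right)\right),
\]
with path weights $w_{j_1,\ldots,j_L}=w_{j_1}w_{j_1,j_2}\cdots w_{j_{L-1},j_L}$. I would then set $V=\sum_{j_1,\ldots,j_L} w_{j_1,\ldots,j_L}$ and $a_{j_1,\ldots,j_L}=w_{j_1,\ldots,j_L}/V$, so the path weights form a probability distribution.

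Third, I would use the chain-rule/Markov factorization $a_{j_1,\ldots,j_L}=a_{j_1}a_{j_2\mid j_1}\cdots a_{j_L\mid j_{L-1}}$ to redistribute the mass back across the layers. The key computation is to check that $a_{j_\ell\mid j_{\ell-1}}\propto w_{j_{\ell-1},j_\ell}V_{j_\ell}$, where $V_{j_\ell}$ is the subnetwork variation below node $j_\ell$; summing in $j_\ell$ confirms the conditional sums to $1$ and therefore that $\sum_{j_\ell} a_{j_\ell\mid j_{\ell-1}}=1$ layerwise. Once the $a$'s are in place, I would push the single overall factor $V$ from the path-sum $f(w,x)=f(aV,x)$ inward through the ReLUs one layer at a time using positive homogeneity, ending with $V$ attached only to the input coordinates, i.e. $f(w,x)=f(a, Vx)$. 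That is the desired representation.

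The only delicate step is verifying the Markov redistribution: one has to ensure the normalizing constants at each conditional match up so that the product of layerwise sums equals $V$ rather than an artifact of the grouping, and that the $V_{j_\ell}$ weighting (not the raw $w_{j_{\ell-1},j_\ell}$) is the correct proportionality — a subtlety the paper explicitly flags. After that, pushing $V$ inward is a routine application of $\phi(cz)=c\phi(z)$ for $c\ge 0$, and the theorem follows with no further work.
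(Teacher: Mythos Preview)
Your proposal is correct and follows essentially the same argument as the paper: reduce to nonnegative weights via sign-duplication, use positive homogeneity to collapse the layer weights into path weights, normalize by $V$ to obtain a probability distribution with Markov factorization, and then push $V$ back through the ReLUs onto the inputs. The only addition is that you spell out the verification that the conditionals $a_{j_\ell\mid j_{\ell-1}}$ sum to $1$, which the paper leaves implicit.
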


For the following section, we let $\calF = \calF_{L,V}$ be the class of $L$ layer deep nets with variation not more than $V$.

\section{Complexities Satisfying Network Contraction} \label{sec:complexity}

For a function class $\calF$ constructed from multiple layer compositions of convex combinations and Lipschitz $1$ functions, we investigate complexity measures that enjoy contraction under such operations. We build upon the ideas of \citep{Golowich+}, which permit one to avoid the exponential dependence on depth in the complexity bounds (Rademacher and Gaussian) that are used to bound the generalization error and statistical risk. The Rademacher case is covered by their paper \citep{Golowich+}, except that we extract what can be an improved constant $V$ (also interpretable as the $ \ell^1 $-norm of the product of the weight matrices) instead of the product of matrix norms of the weight matrices.

For a symmetric perturbation distribution $\mu$ on the line, we let $\xi_1, \xi_2, \ldots, \xi_n$ be i.i.d. according to $\mu$.  Then for a subset $A$ of $R_n$, the $\mu$-Complexity of $A$ denoted $C(A)$ is given by $E \left[\sup_{a \in A}  ( \sum_{i=1}^n \xi_i a_i)_+ \right].$
For symmetric sets $A$, for which $-A=A$, the supremum inside the expectation is always positive so it is then the same as
$E \left[\sup_{a \in A}  (\sum_{i=1}^n \xi_i a_i)\right]$.  When $\mu$ is symmetric Bernoulli or Gaussian, respectively, then $C(A)$ is the Rademacher Complexity or Gaussian Complexity.

Let $\psi(z)$ be any positive, increasing and convex function on $z \ge 0$.
A prime example is $\psi(z)=e^{\lambda z}$ with $\lambda > 0$.  Closely related is $G(z)=\psi(z_+)$ which is a nondecreasing, convex function of $z$ on the line, where $z_+ = \max\{z,0\}$.
Fix such a convex $\psi$. The $\mu,\psi$-Complexity of $A$, denoted $C(A)=C_\psi (A)=C_{n,\mu,\psi}(A)$, is given by
$$\psi(C (A)) \; = \; M(A) \; = \; E \left[\sup_{a \in A}  \psi(( \sum_{i=1}^n \xi_i a_i)_+)\right].$$
Note that $C_\psi (A)$ is obtained by applying the inverse of $\psi$ to the expected supremum.  In particular for $\psi(z) = e^{\lambda z}$, we have $C_\psi (A) = \frac{1}{\lambda} \log M(A)$.
When $\mu$ is symmetric Bernoulli or Gaussian, respectively, then $C(A)$ is a natural variant of the Rademacher Complexity or Gaussian Complexity. For increasing convex $\psi$ these variants provide (via Jensen's inequality) potentially useful upper bounds on the complexities in their original forms, especially in the context of contractive compositions.


An important setting is when $A =\{(f(X_1),\ldots,f(X_n)):f \in \calF\}$ is the set  $\calF_{X^n}$ of achievable function values in $\calF$ restricted to the data $X^n$.  The resulting $C(\calF_{X^n})$ is called the empirical $\mu,\psi$-Complexity of $\calF$. In our deep net example, this $A=\calF_{X^n}$ is symmetric when there is there is either no nonlinear final output activation function $\phi_{out}$ or when it is odd-symmetric, as in the case of the function that clips the output to the range $[-B,B]$.
For this deep net function class, we will have unform control of the complexity $C(\calF_{X^n})$ for all data with $X_i$ in $[-1,1]^n$.  These can be used to provide upper bounds on risk.  

Let $\phi \circ \calF = \calF_\phi = \{\phi(f(\cdot)): f \in \calF\}$ be constructed by composition of functions in $\calF$ with a contraction $\phi$, that is, a Lipschitz $1$ function $\phi$ with $\phi(0)=0$.  As we shall see it provides no increase in complexity.  The same is true with the taking of a convex hull. Signed convex hulls are nearly contractive for these complexities, introducing only a factor of $2$. In deep net literature these are familiar facts for Rademacher complexity and these conclusions are also available for Gaussian complexity. 

Given a class of functions $\calF$ let $\calF' = \hbox{conv}\{\pm \phi \circ \calF \}$ be the class of functions with one more layer.  That is,
$$\hbox{conv}\{\pm \phi \circ \calF \} = \{\sum_j c_j \phi( f_j (\cdot) )\; :\; f_j \in \calF, \sum_j |c_j| \le 1\}.$$ 
Likewise  we form $\phi \circ A  = \{(\phi(a_1),\ldots,\phi(a_n)) :  a \in A\}$ for any $A$ in $R^n$
and let $A' = \hbox{conv}\{\pm \phi \circ A \}$ be given by 
$$\{(\sum_j c_j \phi( a_{j,1} ),\ldots,\sum_j c_j \phi( a_{j,n} ) ) \; :\; a_j \in A , \sum_j |c_j | \le 1\}$$ 
where $a_j = (a_{j,1},\ldots,a_{j,n})$.


\begin{lemma}[Contraction Lemma] \label{lmm:contract}
For any contraction $\phi$, for any increasing convex $\psi$, and for $\mu$ symmetric Bernoulli or Gaussian, $M(\phi \circ A) \le M(A)$. Likewise $M(\hbox{conv} A) \le M(A)$ and
$$M(\hbox{conv}\{\pm \phi \circ A \}) \le 2M(A).$$
\end{lemma}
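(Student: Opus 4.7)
My plan is to establish the three inequalities separately, with the third following by combining the first two with a symmetrization step. Throughout let $G(z) = \psi(z_+)$; since $\psi$ is nondecreasing and convex on $[0,\infty)$ and $(\cdot)_+$ is nondecreasing and convex on $\reals$, $G$ is nondecreasing and convex on $\reals$, so $M(A) = \Expect \sup_{a \in A} G(\sum_i \xi_i a_i)$.

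The bound $M(\mathrm{conv}\,A) \le M(A)$ requires only monotonicity of $G$. For $a = \sum_j p_j a_j$ with $a_j \in A$, $p_j \ge 0$, $\sum_j p_j = 1$, one has $\sum_i \xi_i a_i \le \max_j \sum_i \xi_i a_{j,i}$; applying $G$ gives $G(\sum_i \xi_i a_i) \le \max_j G(\sum_i \xi_i a_{j,i}) \le \sup_{a' \in A} G(\sum_i \xi_i a'_i)$, and taking the supremum on the left over $\mathrm{conv}\,A$ followed by expectation finishes the claim.

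The contraction bound $M(\phi \circ A) \le M(A)$ is the technical heart. In the Rademacher case I would invoke the Ledoux--Talagrand contraction principle (Chapter 4 of \citep{Ledoux+Talagrand}) applied to the convex nondecreasing function $G$: for iid Rademacher $\xi_i$ and a contraction $\phi$ with $\phi(0) = 0$, $\Expect \sup_a G(\sum_i \xi_i \phi(a_i)) \le \Expect \sup_a G(\sum_i \xi_i a_i)$. This is shown by induction on $n$, peeling off one coordinate $\xi_k$ at a time: conditioning on the other $\xi_i$, the problem reduces to the single-variable inequality
\[
\tfrac{1}{2}G\bigl(\sup_a (s(a) + \phi(a_k))\bigr) + \tfrac{1}{2}G\bigl(\sup_a (s(a) - \phi(a_k))\bigr) \le \tfrac{1}{2}G\bigl(\sup_a (s(a) + a_k)\bigr) + \tfrac{1}{2}G\bigl(\sup_a (s(a) - a_k)\bigr),
\]
where $s(a)$ collects the already-processed coordinates. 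This single-variable step is the main obstacle: picking maximizers $(s_+, c_+), (s_-, c_-) \in \{(s(a), a_k): a \in A\}$ for the two left-hand suprema and using $|\phi(c_+) - \phi(c_-)| \le |c_+ - c_-|$ in tandem with the convexity and monotonicity of $G$, one produces a pointwise bound against suitable candidates on the right. To pass to the Gaussian case, condition on $(|\xi_1|, \ldots, |\xi_n|)$: the signs $\mathrm{sign}(\xi_k)$ are iid Rademacher and independent of the magnitudes, and the coordinatewise maps $\tilde \phi_k(t) = |\xi_k| \phi(t/|\xi_k|)$ remain contractions vanishing at $0$, so the Rademacher inequality applies conditionally and integrates back.

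For the signed-convex-hull bound I would combine the first two claims with a symmetrization. Since $\phi(0) = 0$, the set $\mathrm{conv}\{\pm \phi \circ A\}$ equals $\mathrm{conv}(B \cup \{0\})$ for $B := (\phi \circ A) \cup (-\phi \circ A)$; because $G \ge \psi(0)$ pointwise, augmenting by the origin does not change $M$, and the convex-hull inequality then yields $M(\mathrm{conv}\{\pm \phi \circ A\}) \le M(B)$. Finally,
\[
\sup_{a \in B}\Bigl(\sum_i \xi_i a_i\Bigr)_+ = \max\Bigl\{\sup_{a \in \phi \circ A}\Bigl(\sum_i \xi_i a_i\Bigr)_+,\; \sup_{a \in \phi \circ A}\Bigl(-\sum_i \xi_i a_i\Bigr)_+\Bigr\},
\]
and bounding $\psi$ of the max by the sum of the two nonnegative terms, taking expectations, and using the symmetry of $\mu$ so that the two terms are equal in expectation, gives $M(B) \le 2 M(\phi \circ A) \le 2 M(A)$ after invoking the contraction step.
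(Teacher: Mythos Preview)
Your argument is correct and follows the same overall shape as the paper: the convex-hull step via monotonicity, the symmetrization to pick up the factor $2$, and the Ledoux--Talagrand contraction inequality (their eq.\ (4.20)) for the Rademacher case are all handled the same way, though you decompose the three claims more cleanly while the paper treats the signed convex hull in one pass.

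The one genuine methodological difference is the Gaussian contraction. The paper does not reduce to the Rademacher case; instead it invokes a Fernique-type Gaussian comparison inequality due to Vitale, verifying directly that the processes $\tilde Z_a = \sum_i \xi_i \phi(a_i)$ and $Z_a = \sum_i \xi_i a_i$ satisfy the increment and variance dominations $E(\tilde Z_a - \tilde Z_b)^2 \le E(Z_a - Z_b)^2$ and $E\tilde Z_a^2 \le E Z_a^2$, which follow termwise from $|\phi(s)-\phi(t)|\le|s-t|$ and $\phi(0)=0$. Your route---condition on $(|\xi_1|,\ldots,|\xi_n|)$, use that the signs are i.i.d.\ Rademacher, and apply Ledoux--Talagrand with the coordinatewise contractions $\tilde\phi_k(t)=|\xi_k|\phi(t/|\xi_k|)$---is more self-contained, since it avoids importing a separate Gaussian comparison theorem. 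Note, though, that it requires the coordinate-dependent form of the Rademacher contraction (different $\phi_k$ per coordinate), whereas your sketch of the inductive proof only treats a single $\phi$; the coordinatewise version is equally standard and follows from the same peeling argument, but you should state that you are using it. The paper's approach, by contrast, handles the Gaussian case without any appeal to the Rademacher result.
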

 
\begin{proof} In the definition of the set $A'$ write the $c_j$ as $\varsigma_j p_j$ where $\varsigma_j\in\{-1,+1\}$ is the sign and $p_j$ is the magnitude. 
Now
$$M(A') = E[\sup_{p_j,\varsigma_j,a_j: j=1,2,\ldots} \psi( (\sum_i \xi_i \sum_j p_j \varsigma_j \phi(a_{j,i}))_+)],$$
where the sumpremum is for all choices of $a_j$ from $A$ along with signs $\varsigma_j$ and  weights $p_j$ which sum to not more than $1$.
Exchanging the sums it is
$$E[\sup_{p_j,\varsigma_j,a_j: j=1,2,\ldots} \psi( (\sum_j p_j \varsigma_j \sum_i \xi_i \phi(a_{j,i}))_+)].$$
Use the monotonicity of $\psi((z)_+)$, 
and the fact that an average over $j$ is less than the maximum to get this is less than 
$$E[\sup_{\varsigma_j,a_j: j=1,2,\ldots} \psi( (\max_j \varsigma_j \sum_i \xi_i \phi(a_{j,i}))_+)].$$
By the monotonicity again this is
$M(\pm \phi \circ A) = E[\sup_{a \in A, \varsigma \in \{-1,1\}} G( \varsigma \sum_i \xi_i \phi(a_i))]$,
which is at most
$$
E[\sup_{a \in A}G(\sum_i \xi_i \phi(a_i))] + E[\sup_{a \in A}G(-\sum_i \xi_i \phi(a_i))] = 2E[\sup_{a \in A}G(\sum_i \xi_i \phi(a_i))].
$$
If the $ \xi_i $ are Rademacher, one uses \citep{Ledoux+Talagrand}, equation (4.20) to address the effect of the contraction and bound $ E[\sup_{a \in A}G(\sum_i \xi_i \phi(a_i))] $ by $ E[\sup_{a \in A}G(\sum_i \xi_i a_i)] $, as in \citep{Golowich+}. 

If the $ \xi_i $ are Gaussian with variance $ \sigma^2 $, we use inequality (13) of \citep{Vitale} with $ m_i = 0 $, which provides a variant of the Fernique inequality for Gaussian contraction, analogous to the cited inequality 
for Rademakers. Let $Z_a = \sum_i \xi_i a_i$ and $\tilde Z_a = \sum_i \xi_i \phi(a_i)$ be mean zero Gaussian processes indexed by $a$ in $A$.  For this variant of the Fernique inequality in \citep{Vitale},  it suffices that $E[(\tilde Z_a - \tilde Z_b)^2]$ is not more than $E[(Z_a\!-\!Z_b)^2]$ and that $E[\tilde Z_a^2]$ is not more than $E[Z_a^2]$, for $a$ and $b$ in $A$.  By the form of $Z_a$ and $\tilde Z_a$ these quantities are $\sigma^2$ times the squared norms $\sum_i (\phi(a_i)-\phi(b_i))^2$ and $\sum_i (a_i\!-\!b_i)^2$, as well as $\sum_i (\phi(a_i))^2$ and $\sum_i (a_i)^2$, respectively. The desired inequalities hold (term-by-term) by the assumed contractive properties of $\phi$. 
This completes the proof of Lemma \ref{lmm:contract}.
\end{proof}

\par\vspace{0.2cm}
Let $A^L = ((A')') \ldots )'$ be the subset of $R^n$ obtained by $L$ layers of the operation  $( )' = \hbox{conv}\{\pm \phi \circ (\cdot)\}$ starting from a set $A$. Likewise let $\calF^L$ be the class of functions obtained by $L$ layers of such operation starting from a function class $\calF$, with corresponding traces $A^L= \calF_{X^n}^L$, in $R^n$ from restriction to data $X^n$.  In the Rademacher case the following conclusion is from \citep{Golowich+}.

\begin{corollary}[Multiple layer contraction]
For any set $A$, the Rademacher and Gaussian complexity of $A^L$ satisfies $M(A^L)\le 2^LM(A)$. 
In particular, using $\psi_{\lambda} (z) = e^{\lambda z}$, we have that $C(A^L) \le C_{\psi_{\lambda}} (A^L)$ is not more than 
$$\frac{1}{\lambda} L \log (2)  + C_{\psi_{\lambda}} (A).$$ 
\end{corollary}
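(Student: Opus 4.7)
The plan is to deduce the corollary as a direct iteration of the Contraction Lemma, plus a Jensen step for the $C$ versus $C_{\psi_\lambda}$ comparison.

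First I would establish the exponential bound $M(A^L) \le 2^L M(A)$ by induction on $L$. The base case $L = 0$ is trivial. For the inductive step, note that by definition $A^L = (A^{L-1})'$, where $(\cdot)' = \mathrm{conv}\{\pm \phi \circ (\cdot)\}$. Applying the third inequality of Lemma \ref{lmm:contract} to the set $A^{L-1}$ yields $M(A^L) \le 2 M(A^{L-1})$, and the inductive hypothesis then gives $M(A^L) \le 2 \cdot 2^{L-1} M(A) = 2^L M(A)$. The only substantive input here is that Lemma \ref{lmm:contract} is valid for both symmetric Bernoulli and Gaussian $\mu$, and for any increasing convex $\psi$, so the induction applies uniformly to the Rademacher and Gaussian cases.

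Second, I would specialize to $\psi_\lambda(z) = e^{\lambda z}$ and take logarithms. Since $C_{\psi_\lambda}(B) = \frac{1}{\lambda} \log M(B)$ for any set $B$, the bound $M(A^L) \le 2^L M(A)$ immediately becomes
\[
C_{\psi_\lambda}(A^L) \; = \; \frac{1}{\lambda} \log M(A^L) \; \le \; \frac{1}{\lambda} \log\bigl( 2^L M(A) \bigr) \; = \; \frac{L \log 2}{\lambda} + C_{\psi_\lambda}(A).
\]

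Third, to justify the prefix $C(A^L) \le C_{\psi_\lambda}(A^L)$, I would invoke Jensen's inequality. Writing $S = \sup_{a \in A^L} (\sum_i \xi_i a_i)_+$, convexity of $\psi_\lambda$ gives $\psi_\lambda(E[S]) \le E[\psi_\lambda(S)]$, and since the supremum commutes with the monotone $\psi_\lambda$ inside the expectation on the right, we obtain $\psi_\lambda(C(A^L)) \le M(A^L) = \psi_\lambda(C_{\psi_\lambda}(A^L))$; monotonicity of $\psi_\lambda$ then yields $C(A^L) \le C_{\psi_\lambda}(A^L)$.

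There is no real obstacle here since everything is packaged by Lemma \ref{lmm:contract}. The only point to keep an eye on is making sure the induction is applied to the correct form of the lemma (the signed-convex-hull composition, which carries the factor $2$), rather than to the plain contraction or convex hull, which would have given a factor $1$ and a stronger but false bound for the iterated set $A^L$.
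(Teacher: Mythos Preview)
Your proposal is correct and is exactly the argument the paper intends: the corollary is stated without its own proof, as it follows immediately by iterating the third inequality of Lemma~\ref{lmm:contract} $L$ times, taking logarithms in the $\psi_\lambda$ case, and invoking the Jensen observation the paper already notes (that $C_\psi$ upper-bounds $C$ for increasing convex $\psi$). Your care in distinguishing the signed-convex-hull step (factor $2$) from the plain contraction and convex-hull steps (factor $1$) is apt and matches the paper's setup.
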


Let \emph{init} $= \{x_1,x_2,\ldots,x_d,-x_1,-x_2,\ldots,-x_d\}$ be the collection of signed coordinate functions of $x=(x_1,\ldots,x_d)$ in $[-1,1]^d$, and let $\calF_{0,V} = V$\emph{init} be such scaled by $V$. Then using the contraction $\phi(z)=(z)_+$ the class of functions $(\calF_{0,V})^L$ is the set of $L$ layer networks described above with a linear unit on the output layer and with no restriction on the number of units per layer past the first input layer. Moreover $\calF_{L,V} = \phi_{out} \circ (\calF_{0,V})^L$ is the set of such networks with a final Lipschitz function $\phi_{out}$ on the output layer.

With $n$ observed input vector instances $X_i$ in $[-1,1]^d$ for $i=1,2,\ldots,n$, the restriction $A= \emph{init}_{X^n}$ is
$$\{\pm (X_{1,j},X_{2,j},\ldots,X_{n,j}) \; : \; 1\le j \le d\}$$ 
and the restriction of $\calF_{0,V}$ to $X^n$ is this input set multiplied by $V$ of cardinality $2d$ in $R^n$. The following bounds hold uniformly over such possible data $X^n$.

\begin{corollary}[Complexity bounds for ReLU networks] \label{cor:complexity}
The Rademacher and Gaussian complexities (with $\sigma=1$) of the deep net classes $\calF_{L,V}$ are not more than the complexity using $\psi_\lambda (z)= e^{\lambda z}$  at suitable $\lambda$, which is not more than the complexity 
$$
C(\calF_{L,V}) =  V\sqrt{2n(L\log(2)+\log (2d))}.
$$
This bound is independent of 
the number of units on each hidden layer. 
\end{corollary}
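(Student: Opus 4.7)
The plan is to bound $C(\calF_{L,V})$ by first dominating it by the exponential variant $C_{\psi_\lambda}$ with $\psi_\lambda(z) = e^{\lambda z}$, then stripping the $L$ nonlinear layers via the contraction machinery established above, and finally computing the complexity of the base input set explicitly and optimizing over $\lambda>0$. The first reduction is immediate from Jensen's inequality applied to the increasing convex $\psi_\lambda$: for every symmetric $B \subset \reals^n$ one has $C(B) \le C_{\psi_\lambda}(B) = \frac{1}{\lambda}\log M(B)$, which handles the Rademacher case directly and the Gaussian case with $\sigma = 1$ in the same breath.

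Decomposing $\calF_{L,V} = \phi_{out} \circ (\calF_{0,V})^L$, I would first apply the single-contraction part of Lemma \ref{lmm:contract} to strip the outermost Lipschitz-$1$ map $\phi_{out}$ at no cost (the hypotheses are satisfied both for the identity and for clipping to $[-B,B]$, both of which fix $0$). The Multiple Layer Contraction Corollary then peels the $L$ interior $\hbox{conv}\{\pm \phi \circ \cdot\}$ layers, each contributing one additive $\frac{\log 2}{\lambda}$ term in the exponential complexity, yielding
$$C_{\psi_\lambda}\bigl((\calF_{L,V})_{X^n}\bigr) \;\le\; \frac{L \log 2}{\lambda} \;+\; C_{\psi_\lambda}\bigl((\calF_{0,V})_{X^n}\bigr).$$
For the base case, $(\calF_{0,V})_{X^n}$ is the symmetric collection of $2d$ vectors $\pm V(X_{1,j},\ldots,X_{n,j})$, each with $\ell_2$-norm at most $V\sqrt{n}$ because $|X_{i,j}| \le 1$. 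Symmetry lets the positive part inside $M$ be dropped, and a union bound combined with the standard sub-Gaussian MGF estimate $\Expect[e^{\lambda \sum_i \xi_i a_i}] \le e^{\lambda^2 \|a\|_2^2/2}$ (valid uniformly for Rademacher and standard Gaussian $\xi_i$) gives $C_{\psi_\lambda}((\calF_{0,V})_{X^n}) \le \frac{\log(2d)}{\lambda} + \frac{\lambda V^2 n}{2}$.

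Combining the displays yields the upper bound $\frac{L\log 2 + \log(2d)}{\lambda} + \frac{\lambda V^2 n}{2}$, and the only real choice is to balance the two summands by picking $\lambda = \sqrt{2(L\log 2 + \log(2d))/(V^2 n)}$, which reproduces the advertised $V\sqrt{2n(L\log 2 + \log(2d))}$. The proof is essentially bookkeeping once the contraction machinery is in hand, and I anticipate no genuine obstacle beyond verifying that $\phi_{out}$ legitimately falls under the single-contraction clause. Independence from the hidden-layer widths comes for free, because the convex-hull step in the contraction lemma depends on the $\ell_1$ constraint on coefficients rather than on the number of summands; and the $V\sqrt{n}$ scaling (rather than $Vn$) is a direct dividend of the domain restriction $x \in [-1,1]^d$, which caps the Euclidean norm of each base vector at $V\sqrt{n}$.
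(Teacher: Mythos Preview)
Your proposal is correct and follows essentially the same route as the paper: peel the layers via the multiple-layer contraction corollary (picking up the $L\log 2$ term), reduce to the finite base set $(\calF_{0,V})_{X^n}$ of $2d$ vectors with $\ell_2$-norm at most $V\sqrt{n}$, and apply the union bound plus sub-Gaussian MGF estimate (Massart's lemma) with the optimizing choice $\lambda = \sqrt{2(L\log 2 + \log(2d))/(V^2 n)}$. Your explicit handling of $\phi_{out}$ via the single-contraction clause is a detail the paper leaves implicit but is indeed needed since $\calF_{L,V} = \phi_{out}\circ(\calF_{0,V})^L$.
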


\begin{proof} The multiple layer contraction bound reduces the task to evaluation of complexity of the finite set $\calF_{0,V}$ at the input layer, restricted to $X^n$.  This set is of cardinality $2d$ and scale $V$ so
we proceed as in Massart's Lemma (e.g., Shalev-Shwartz and Ben-David, Lemma 26.8) with the choice there of $\lambda = \sqrt{2\log(2^L2d)/(V^2 n)}$. The Gaussian complexity proof is essentially the same using $\lambda = \sqrt{2\log(2^L2d)/(\sigma^2 V^2 n)}$ and the resulting Gaussian complexity bound is multiplied by $\sigma$ when $\sigma$ is not equal to $1$.
\end{proof}

\section{Metric entropy} \label{sec:entropy}

For the Gaussian location problem the full data is $Y=(Y_1,\ldots,Y_n)$ with $Y_m\sim$Normal($a_m,\sigma^2$), independently for $m=1,2,\ldots,n$, and we consider mean vectors $a$ living in a set $A$. A batch estimator $\hat a$ is a function of $Y$ in which each $\hat a_m$ may depend on all the coordinates of $Y$, and a predictive or online estimator is one in which each coordinate $\hat a_m$ only depends on preceding  $Y_1,\ldots,Y_{m-1}$, for each $m$ from $1$ to $n$. The risk of an estimator $\hat a$ is set to be $E[\|\hat a\! \!- a\|^2]$ where $\|b\!-\!a\|^2 = (1/n) \sum_{i=1}^{n} (b_i\!-\!a_i)^2$. 
Let $r_n(A)$ and $r_n^*(A)$, respectively, be upper bounds on the risk of an estimator and a predictive estimator that hold uniformly over $a$ in $A$. Implicitly these depend on $\sigma$ as well as $n$ and $A$.
The infimum of $r_n(A)$ over choices of estimators is of course the minimax risk. Let $\log N(\epsilon,A)$ be the Kolmogorov $\epsilon$ entropy (the metric entropy) of the set $A$ with respect to the Euclidean metric, where $N(\epsilon,A)$ is the size of the maximal $\epsilon$-packing of $A$, the largest size subset $\tilde A$ of $A$ such that the distance between members of $\tilde A$ is greater than $\epsilon$.
We use log base $e$.

Risks and metric entropy have the following relationship.

\begin{lemma} \label{lmm:riskmetric}
For all $\epsilon > 0$,
$$r_n(A) \ge \frac{\epsilon^2}{4} \left( 1- \frac{n \,r_n^*(A)/(2\sigma^2) + \log 2}{\log N(\epsilon,A)}\right).$$
Equivalently, for $\epsilon^2 > 4 r_n(A)$,
$$\log N(\epsilon,A) \le \frac{n \, r_n^*(A)/(2\sigma^2) + \log 2} {1 - 4 r_n(A)/\epsilon^2}.$$
In particular, for $\epsilon^2 \ge  8 r_n(A)$,
$$\log N(\epsilon,A) \le n \, r_n^*(A)/\sigma^2 + 2 \log 2.
$$
\end{lemma}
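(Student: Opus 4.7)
The plan is to use Fano's method in its standard Gaussian form: an $\epsilon$-packing of $A$ turns the estimation problem into multi-way hypothesis testing whose error probability is sandwiched between a Markov upper bound (in terms of $r_n(A)$) and a Fano lower bound (in terms of $I(a;Y)$). The non-routine ingredient is to replace $I(a;Y)$ by the predictive risk $r_n^*(A)$ through a chain-rule argument on the Gaussian channel.

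Let $\tilde A \subseteq A$ be a maximal $\epsilon$-packing, so $|\tilde A| = N(\epsilon, A)$ and $\|a - a'\| > \epsilon$ for distinct $a, a' \in \tilde A$. Place the uniform prior on $\tilde A$. For any estimator $\hat a = \hat a(Y)$, let $\hat T \in \tilde A$ denote the closest packing point to $\hat a$. Whenever $\hat T \ne a$, the triangle inequality combined with the packing condition forces $\|\hat a - a\| > \epsilon/2$, so Markov's inequality gives $\prob{\hat T \ne a} \le 4 r_n(A)/\epsilon^2$ uniformly in $a \in \tilde A$. Fano's inequality supplies the matching lower bound
\[ \prob{\hat T \ne a} \;\ge\; 1 - \frac{I(a;Y) + \log 2}{\log N(\epsilon, A)}. \]

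To control the mutual information I use the chain rule $I(a;Y) = \sum_{m=1}^{n} I(a; Y_m \mid Y^{m-1})$. For any predictive estimator $\hat a^\star$ (so that $\hat a^\star_m$ is measurable in $Y^{m-1}$), the fact that the Bayes mixture density minimizes expected Kullback loss yields
\[ I(a; Y_m \mid Y^{m-1}) \;\le\; E\, D\bigl( \text{Normal}(a_m,\sigma^2) \,\|\, \text{Normal}(\hat a^\star_m,\sigma^2) \bigr) \;=\; \frac{E[(a_m - \hat a^\star_m)^2]}{2\sigma^2}, \]
by the explicit Gaussian KL formula. Summing over $m$ and choosing $\hat a^\star$ to attain the uniform predictive-risk bound $r_n^*(A)$ over $A$ yields $I(a;Y) \le n r_n^*(A)/(2\sigma^2)$.

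Chaining the three bounds produces $4 r_n(A)/\epsilon^2 \ge 1 - (n r_n^*(A)/(2\sigma^2) + \log 2)/\log N(\epsilon, A)$, which is the first stated inequality after rearrangement. The equivalent form for $\epsilon^2 > 4 r_n(A)$ is pure algebra, and the simplification at $\epsilon^2 \ge 8 r_n(A)$ follows because $1 - 4 r_n(A)/\epsilon^2 \ge 1/2$. I expect the mutual-information step to be the most delicate: it requires (i) the Gaussian-channel identity expressing per-coordinate KL as half the squared error, (ii) the information-theoretic fact that any surrogate predictive density upper-bounds $I(a;Y)$, and (iii) a reduction from the Bayes-averaged predictive risk over $\tilde A$ to the uniform bound $r_n^*(A)$ on $A$. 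The packing--Fano--Markov portion is routine.
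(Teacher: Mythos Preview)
Your proof is correct and follows essentially the same route as the paper: project onto a maximal $\epsilon$-packing, apply Fano's inequality to bound the testing error, and control the mutual information by inserting a Gaussian predictive density whose chain-rule KL reduces to the cumulative predictive risk $n\,r_n^*(A)/(2\sigma^2)$. The only cosmetic difference is that the paper bounds $I(A;Y)\le \max_{a\in A} D(P_{Y|a}\Vert Q_Y)$ for the single joint $Q_Y$ built from the predictive conditionals, whereas you decompose $I(a;Y)$ via the conditional-MI chain rule first; the two computations are equivalent.
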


\begin{proof} Fix an $\epsilon > 0$. Let $\tilde a$ be the projection of the estimator $\hat a$ to the packing set $\tilde A$, meaning that each outcome of the estimator $\hat a$ is replaced by the closest member of $\tilde A$. Then for any $a$ in $\tilde A$, by the triangle inequality, the $\| a - \hat a\|$ is at least $(\epsilon/2) 1_{\{a \neq \tilde a\}}$.  So, as in 
\citep{Yang+}, by Fano's inequality, 
$$r_n(A) \ge (\epsilon/2)^2 \left ( 1- \frac{I(A;Y) + \log 2}{\log N(\epsilon,A)}\right).$$
Where $I(A; Y)$ is the mutual information between the parameter vector and the data $Y$, when the parameter is endowed with the uniform prior distribution on $\tilde A$.  This mutual information is the average, with respect to this prior distribution for the parameter $a$, of the Kullback divergence between the conditional distribution of $Y$ given $a$ and the unconditional distribution for $Y$ that minimizes this Bayes average.  Accordingly, for any joint density $q(Y)$ that does not depend on $a$, we have
$$I(A;Y) \le \max_{a \in A} D(P_{Y|a}\|Q_Y).$$
So far this is akin to the development in \citep{Yang+}. Now arrange a different choice of the joint density $q$ from what is there.  Specifically build it from the conditionals  $q(y_m|y_1,\ldots,y_{m-1})$ that are normal with mean $\hat a_m^*$ and variance $\sigma^2$, where $\hat a^*$ is a predictive estimator with risk bound $r_n^*(A)$. Its conditional Kullback divergence from the Normal($a_m,\sigma^2$) is $(a_m\!-\! \hat a_m^*)^2/2\sigma^2$.  By the chain rule for relative entropy, $D(P_{Y|a}\|Q_Y)= \sum_{m=1}^n E(a_m\!-\! \hat a_m^*)^2/2\sigma^2$ which is $n E\|a\!-\!\hat a^*\|^2/2\sigma^2$, not more than $ n \,r_n^*(A)/2\sigma^2$.  Plugging this bound into the Fano inequality establishes the result. 
\end{proof}

This proof bound improves on alternative approaches to the use of Fano's inequality, as in \citep{Birge} and \citep{Tsybakov}, wherein $Q_Y$ is taken to be a  normal $P_{Y|a_0}$, for a fixed $a_0$, a trivial predictive estimator, leading to the squared radius or diameter of $A$ being used there, instead of the smaller bounds on $I(A;Y)$ we have here using the risk of predictive estimators. 

To compare with \citep{Yang+},  the $Q_Y$ there is a uniform mixture of distributions on an $\epsilon'$ net producing the bound on $D(P_{Y|a}\|Q_Y)$ of $n (\epsilon')^2/(2\sigma^2)+ \log N(\epsilon',A)$, using the optimizing $\epsilon'$.  Hence for $\epsilon$ smaller than $\epsilon'$ (but of the same order), one also has a relationship between the $\epsilon$ entropy and the minimax risk.  The variant we use here avoids the need for two choices of $\epsilon$ and provides explicit comparison between metric entropy and any available predictive risk bound.

The $I(A;Y)$ is not more than the information capacity $C_{A}$
of the Gaussian noise channel with inputs restricted to $A$, which is further upper bounded by $\max_{a \in A} D(P_{Y|a}\|Q_Y)$ for any particular choice of $Q_Y$.  So a way to say the results is that, for $\epsilon^2 >8 r_n(A)$, the metric entropy is less than $2C_A$, which in turn is controlled by the predictive risk.  The factor of two is replaced by a number near $1$ if the $\epsilon^2$ is enough greater than the risk $r_n(A)$. 


Next we give an analogous statement for function estimation with random input design.  Let $P$ be any probability measure on the input domain $\calX$ of a real-valued function.  Let $\calF$ be a class of functions $f$ mapping from $\calX$ to an interval $[-B,B]$.  Let $Y_i = f(X_i)+\epsilon_i$ where $\epsilon_i$ are independent normal random variables with mean $0$ and variance $\sigma^2$ for $i=1,2,\ldots,n$. The choice of normal errors is facilitate the demonstration of relationship between metric entropy and certain other measures of complexity, such as Rademacher complexity.  Implications for other noise distributions follow as consequences.

Estimators $\hat f_m$ are based on the samples $((X_i,Y_i)\,:\, i = 1,\ldots, m)$ for $m \le n$. Let $r_m(\calF)$ be a bound on the maximum of the risk $E[\|f-\hat f_m\|^2]$ for $f$ in $\calF$, where $\|f\!-\!g\|$ is the $L_2(P)$ distance. 
Let $n \, r_n^* (\cal F)$ be the cumulative risk $\sum_{m=0}^{n-1} r_m(\calF)$. It may be interpreted as the cumulative risk for the sequence of predictive estimators $\hat f_{m}(X_{m+1})$ of $f(X_{m+1})$. Again $r_m(\calF)$ may depend on the error variance $\sigma^2$ and the range $B$ of the target function. It is familiar that in high-dimensional settings $r_n^*$ and $r_n$ are typically of the same order. Now let $\log N(\epsilon,\calF)$ be the metric entropy with respect to the $L_2(P)$ distance.


\begin{lemma} \label{lmm:riskmetric2}
For all $\epsilon > 0$,
$$r_n(\calF) \ge \frac{\epsilon^2}{4} \left( 1- \frac{n \,r_n^*(\calF)/(2\sigma^2) + \log 2}{\log N(\epsilon,\calF)}\right).$$
In particular, for $\epsilon^2 \ge 8 r_n(\calF)$,
$$\log N(\epsilon,\calF) \le n \, r_n^*(\calF )/\sigma^2 + 2 \log 2.
$$
\end{lemma}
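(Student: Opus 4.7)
The plan is to mimic the proof of Lemma \ref{lmm:riskmetric}, replacing the Euclidean packing on $\mathbb{R}^n$ with an $L_2(P)$ packing on $\calF$ and building the auxiliary measure $Q$ from the sequence of predictive estimators $\hat f_0, \hat f_1, \ldots, \hat f_{n-1}$. Fix $\epsilon>0$ and let $\tilde \calF \subseteq \calF$ be a maximal $\epsilon$-packing in the $L_2(P)$ metric, so $|\tilde\calF| = N(\epsilon,\calF)$. Put the uniform prior on $\tilde\calF$; call the random function $F$. Given any estimator $\hat f_n$, define its projection $\tilde f$ to $\tilde\calF$ (in $L_2(P)$), and use the triangle inequality on the packing to get $\|F - \hat f_n\| \ge (\epsilon/2)\,\mathbf{1}\{\tilde f \neq F\}$. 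Thus $r_n(\calF) \ge (\epsilon/2)^2\, \Pr(\tilde f \neq F)$, and Fano's inequality applied to the Markov chain $F \to (X^n,Y^n) \to \tilde f$ gives
$$r_n(\calF) \;\ge\; \frac{\epsilon^2}{4}\left(1-\frac{I(F;X^n,Y^n)+\log 2}{\log N(\epsilon,\calF)}\right).$$

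Next I bound $I(F;X^n,Y^n)$ by constructing an explicit $Q_{X^n,Y^n}$ that does not depend on $F$. Since $X^n$ is i.i.d.\ $P$ regardless of $f$, I take $Q_{X^n}=P_{X^n}$, so only the conditional part $D(P_{Y^n|f,X^n}\|Q_{Y^n|X^n})$ contributes. I build $Q_{Y^n|X^n}$ \emph{sequentially}: conditional on $(X^m,Y^{m-1})$, let $Y_m$ be Normal$(\hat f_{m-1}(X_m),\sigma^2)$, where $\hat f_{m-1}$ is the predictive estimator based on $((X_i,Y_i):i\le m-1)$ (with $\hat f_0$ any fixed function). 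Because both the true and reference conditionals are Gaussian with variance $\sigma^2$, their KL divergence at step $m$ is $(f(X_m)-\hat f_{m-1}(X_m))^2/(2\sigma^2)$. The chain rule of relative entropy then yields
$$D(P_{Y^n|f,X^n}\|Q_{Y^n|X^n}\mid P_{X^n}) \;=\; \sum_{m=1}^{n}\frac{1}{2\sigma^2}\, E\|f-\hat f_{m-1}\|^2 \;\le\; \frac{n\, r_n^*(\calF)}{2\sigma^2},$$
using that $X_m$ is independent of $(X^{m-1},Y^{m-1})$ with law $P$, so the inner expectation equals $E\|f-\hat f_{m-1}\|_{L_2(P)}^2$, and the sum $\sum_{m=0}^{n-1} r_m(\calF)$ is by definition $n\,r_n^*(\calF)$. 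Maximizing over $f\in\tilde\calF$ bounds $I(F;X^n,Y^n)$ by the same quantity.

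Plugging this into the Fano bound proves the first inequality. The second (for $\epsilon^2 \ge 8 r_n(\calF)$) follows immediately by algebra: rearranging yields $\log N(\epsilon,\calF) \le (n r_n^*(\calF)/(2\sigma^2) + \log 2)/(1 - 4 r_n(\calF)/\epsilon^2)$, and $4 r_n(\calF)/\epsilon^2 \le 1/2$ converts the denominator into a factor of $2$.

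The main obstacle is setting up $Q_{Y^n|X^n}$ as a genuinely online measure so that the chain-rule decomposition bounds the KL divergence by the \emph{cumulative} predictive risk rather than $n\,r_n(\calF)$; once this ``prequential'' construction is in place, the rest is a transcription of the Gaussian-location proof, with expectations over $X_m\sim P$ converting empirical squared errors into $L_2(P)$ squared norms.
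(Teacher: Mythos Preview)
Your proof is correct and follows essentially the same approach as the paper's: both transplant the Fano argument of Lemma~\ref{lmm:riskmetric} to the regression setting by treating the full data $Z^n=(X^n,Y^n)$, choosing $Q_{Z^n}$ with $Q_{X^n}=P_{X^n}$ and the sequential Gaussian conditionals $Q_{Y_m\mid X_m,Z^{m-1}}=\mathrm{Normal}(\hat f_{m-1}(X_m),\sigma^2)$, and then invoking the chain rule so that the KL divergence becomes the cumulative predictive risk $\sum_{m=0}^{n-1}r_m(\calF)/(2\sigma^2)=n\,r_n^*(\calF)/(2\sigma^2)$. Your write-up is in fact more explicit than the paper's (which simply says ``similar to Lemma~\ref{lmm:riskmetric}'') in spelling out why $X_m\perp Z^{m-1}$ converts the per-step KL into an $L_2(P)$ squared norm and in deriving the second inequality by algebra.
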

\begin{proof} Similar to Lemma \ref{lmm:riskmetric}.  Here the mutual information, now $I(\calF, Z)$, has the role of the full data played by $Z= Z^n=(Z_1,Z_2,\ldots,Z_n)$ where $Z_m=(X_m,Y_m)$ has the inputs as well as the responses.  The $P_{Z^n|f}$ makes these independent with $P_{X_m}=P$ and $P_{Y_m|X_m,f}$ normal with mean $f(X_m)$, whereas the $Q_{Z^n}$ uses $Q_{Y_m|X_m,Z^{m-1}}$ normal with mean $\hat f_{m-1}(X_m)$ and matching variance $\sigma^2$. Then $I(\cal F,Z)$ is not more than $\max_{f \in \calF} D(P_{Z^n|f}||Q_{Z^n})$ where the $D(P_{Z^n|f}||Q_{Z^n})$ is $\sum_{m=1}^n E(f(X_m)\!-\!\hat f_{m-1}(X_m))^2/2\sigma^2$, not more than $n \,r_n^*(\calF)/2\sigma^2$. 
\end{proof}

\begin{remark}
\indent
\begin{enumerate}
\item As a device for bounding the metric entropy $\log N(\epsilon,\calF)$, one has freedoms in the choice of $n$ and $\sigma^2$.
\item In statistical learning risk bounds for certain function classes take the form $r_0(\calF) \le (B+\sigma) C_\calF$
and for $m \ge 1$
\begin{equation} \label{eq:riskform}
r_m (\calF) = (B+\sigma) \frac{C_\calF}{\sqrt{m}}.
\end{equation}
This holds for least squares estimators, where $B$ controls the range of the target function, $\sigma^2$ is the variance of the Gaussian noise (or more generally a sub-Gaussian constant) and $C_\calF$ depends on the Gaussian complexity and the Rademacher complexity of $\calF$.  Bounds of this type will be discussed further in the next section.
\end{enumerate}
\end{remark}

When \eqref{eq:riskform} holds the cumulative risk has the sum of $1/\sqrt{m}$ which is near $2 \sqrt{n}$, and indeed $n r^*_n(\calF) \le 2 n r_n (\calF)$.  Whence the $r_n^*(\calF)$ may be replaced by the bound $2 r_n(\calF)$ in the above Lemma \ref{lmm:riskmetric2}.  Likewise for any polynomial rate of risk $(1/n)^{\alpha}$ with $0< \alpha < 1$ we may replace the $r_n^*(\calF)$ with $(1/\alpha) r_n (\calF)$.

Let's examine the matter with additional precision 
assuming \eqref{eq:riskform} holds.
For $m \ge 1$, the integral $\int_{m-0.5}^{m+0.5} x^{-1/2} dx$ equals $2[\sqrt{m+0.5}- \sqrt{m-0.5}\,]$, which is at least $1/\sqrt{m}$. So by integral comparison one finds for $n \ge 1$ that $\sum_{m=0}^{n-1} 1/\max\{\sqrt{m},1\}$ is not more than $1+\int_{0.5}^{n-0.5} x^{-1/2} dx$ which is $1+ 2 \sqrt{n-0.5} - 2/\sqrt{2}$, not more than 
$2\sqrt{n-0.5}$. The slack between $\sqrt{n-0.5}$ and $\sqrt{n}$ allows us to realize the following for all real $n\ge 1$ and $\sigma > 0$.
For all $\epsilon^2 \ge 8 (B +\sigma^2)C_{\calF}/\sqrt{n}$,
$$\log N(\epsilon,\calF) \le \frac{2 \sqrt{n} (B+\sigma) C_{\calF}}{\sigma^2} + 2 \log 2.$$
In particular, setting $\sqrt{n} = 8 (B +\sigma)C_{\calF}/\epsilon^2$, we have
$$\log N(\epsilon,\calF) \le \frac{16 (B+\sigma)^2 C_{\calF}^2} {\epsilon^2 \sigma^2} + 2 \log 2.$$
Using Lemma \ref{lmm:riskmetric2} sets $\epsilon^2$ not larger than $8 (B +\sigma)C_{\calF}$ so that $n$ is at least $1$. This entropy bound holds trivally for $\epsilon^2$ at least $(B+\sigma)C_{\calF}$, as then the optimal $\epsilon$ cover entails use of just one point, per the bound on the no-data risk $r_0$, whence $N(\epsilon,\calF)=1$.  So this metric entropy bound holds for all $\epsilon > 0$. Taking the limit for large $\sigma$ we obtain the following.

\begin{corollary} \label{cor:entropy}
For all $\epsilon > 0$, the metric entropy of a function class $\calF$ with risk bounds of the form \eqref{eq:riskform} satisfies
$$\log N(\epsilon,\calF) \le \frac{16 C_{\calF}^2} {\epsilon^2} + 2 \log 2.$$
\end{corollary}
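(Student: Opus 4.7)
The plan is to combine Lemma \ref{lmm:riskmetric2} with the assumed risk form \eqref{eq:riskform}, then optimize the two free parameters that enter that lemma: the auxiliary sample size $n$ and the noise scale $\sigma$ of the imagined Gaussian observation model. These two parameters are not intrinsic to the metric-entropy statement we want to prove, so I have complete freedom in how I choose them.

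First I would bound the cumulative predictive risk appearing in Lemma \ref{lmm:riskmetric2}. Summing the per-step bound $r_m(\calF)\le (B+\sigma)C_\calF/\sqrt{m}$ for $m\ge 1$ together with the trivial bound $r_0 \le (B+\sigma)C_\calF$, and using the integral comparison $\sum_{m=0}^{n-1} 1/\sqrt{\max\{m,1\}} \le 2\sqrt{n}$ sketched just before the corollary, gives $n\,r_n^*(\calF) \le 2\sqrt{n}(B+\sigma)C_\calF$ for every real $n\ge 1$.

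Second, I would calibrate $n$ so that the hypothesis $\epsilon^2 \ge 8 r_n(\calF)$ of Lemma \ref{lmm:riskmetric2} is saturated, namely $\sqrt n = 8(B+\sigma)C_\calF/\epsilon^2$. Substituting the cumulative risk bound into the conclusion of Lemma \ref{lmm:riskmetric2} then yields $\log N(\epsilon,\calF) \le 16(B+\sigma)^2 C_\calF^2/(\sigma^2 \epsilon^2) + 2\log 2$. At this point I send $\sigma \to \infty$: since $\sigma$ was only an auxiliary parameter in the Gaussian-channel reduction (the corollary itself is a purely metric statement), and since $(B+\sigma)^2/\sigma^2 \to 1$, the stated bound $16 C_\calF^2/\epsilon^2 + 2\log 2$ emerges.

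Finally I would handle the boundary case where the chosen $n$ would be below $1$, i.e., $\epsilon^2 \ge 8(B+\sigma)C_\calF$. There the no-data bound $r_0 \le (B+\sigma)C_\calF < \epsilon^2/8$ shows that a single function already $\epsilon$-covers $\calF$, so $\log N(\epsilon,\calF) = 0$ and the inequality holds trivially. The one subtle point worth checking is the legitimacy of the $\sigma \to \infty$ limit: it requires that the assumed risk bound \eqref{eq:riskform} hold uniformly as the noise scale grows, which is natural for least-squares-type estimators whose risk grows only linearly in $\sigma$ through the factor $B+\sigma$. Once that is granted, the argument is pure calibration and does not require any new estimates beyond those already in Lemma \ref{lmm:riskmetric2}.
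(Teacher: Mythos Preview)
Your proposal is correct and follows essentially the same route as the paper: bound the cumulative predictive risk by $2\sqrt{n}(B+\sigma)C_\calF$ via the integral comparison, set $\sqrt{n}=8(B+\sigma)C_\calF/\epsilon^2$ to saturate the hypothesis of Lemma~\ref{lmm:riskmetric2}, handle the large-$\epsilon$ regime by the no-data risk bound $r_0$, and finally let $\sigma\to\infty$. The paper carries out exactly these steps in the paragraph immediately preceding the corollary.
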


\section{Statistical Risk} \label{sec:risk}

In this section, we show how the Rademacher and Gaussian complexities can be used to bound the statistical risk of a constrained least squares estimator over $ \calF_{L, V} $.

\begin{theorem} \label{thm:mainrisk}
Suppose $ \calF_{L, V, B} $ is the collection of all functions in $ \calF_{L, V} $ bounded by $ B $. Suppose $ f^* $ also belongs to $ \calF_{L, V, B} $. Then the constrained least squares estimator $ \hat f $ over $ \calF_{L, V, B} $ satisfies
\begin{equation} \label{eq:rate}
E[\|\hat f - f^*\|^2] \leq 2V(\sigma+4B)\sqrt{\frac{2(L\log(2)+\log (2d))}{n}}.
\end{equation}
\end{theorem}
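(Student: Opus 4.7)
The plan is to run the standard constrained ERM analysis, reducing everything to the complexity bound of Corollary~\ref{cor:complexity}. Write $R := V\sqrt{2(L\log 2 + \log 2d)/n}$, so the target reads $\Expect\|\hat f - f^*\|^2 \le 2(\sigma+4B)R$. Since $\hat f$ minimizes $\sum_i(Y_i - f(X_i))^2$ over $\calF_{L,V,B}$ and $f^* \in \calF_{L,V,B}$, expanding $Y_i = f^*(X_i) + \varepsilon_i$ gives the basic inequality
\[
\|\hat f - f^*\|_n^2 \le \tfrac{2}{n}\sum_i \varepsilon_i\bigl(\hat f(X_i) - f^*(X_i)\bigr),
\]
where $\|g\|_n^2 = (1/n)\sum_i g(X_i)^2$. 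Splitting $\Expect\|\hat f - f^*\|^2 = \Expect\|\hat f - f^*\|_n^2 + \Expect[\|\hat f - f^*\|^2 - \|\hat f - f^*\|_n^2]$, I would control each piece separately.

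For the empirical piece, I replace $\hat f$ by the supremum over $f \in \calF_{L,V,B}$ on the right. For Gaussian noise $\varepsilon_i = \sigma\xi_i$ this yields $\tfrac{2\sigma}{n}\Expect[\sup_f \sum_i \xi_i (f-f^*)(X_i)]$; dropping the $f^*$ term (whose $\xi$-expectation is zero) and using that $0 \in \calF_{L,V,B}$ makes the supremum a.s.\ nonnegative, it coincides with $\tfrac{2\sigma}{n}\, C(\calF_{L,V,B}(X^n))$ in the paper's sense. Monotonicity in the class and Corollary~\ref{cor:complexity} then give $\Expect\|\hat f - f^*\|_n^2 \le 2\sigma R$. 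For sub-Gaussian noise with proxy $\sigma$ the same scaling survives, via the Massart/MGF argument used in the proof of Corollary~\ref{cor:complexity}.

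For the excess piece, standard ghost-sample symmetrization yields
\[
\Expect\bigl[\sup_f(\|f-f^*\|^2 - \|f-f^*\|_n^2)\bigr] \le 2\,\Expect\bigl[\sup_f \tfrac{1}{n}\sum_i \xi_i (f-f^*)^2(X_i)\bigr].
\]
Since $|f-f^*| \le 2B$, the map $z \mapsto z^2/(4B)$ is a $1$-Lipschitz function on $[-2B,2B]$ vanishing at $0$, so Lemma~\ref{lmm:contract} (with $\psi$ the identity) bounds the inner expectation by $4B$ times $\Expect[\sup_f \tfrac{1}{n}\sum_i \xi_i (f-f^*)(X_i)]$. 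By the centering step and Corollary~\ref{cor:complexity} argument from the empirical piece, the latter is at most $R$, so the excess is bounded by $8BR$. Combining the two pieces gives $2\sigma R + 8BR = 2(\sigma+4B)R$, which is \eqref{eq:rate}.

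The main obstacle is the contraction step. Lemma~\ref{lmm:contract} is phrased in terms of the positive-part complexity $M$, so I must justify moving between $\sup_f \sum_i \xi_i a_i$ and its positive part; the hypotheses $0 \in \calF_{L,V,B}$ and $f^* \in \calF_{L,V,B}$ do exactly that. I must also extract the sharp Lipschitz constant $4B$ (rather than a doubled two-sided version) in order to land on the precise factor $2(\sigma+4B)$ in \eqref{eq:rate}, which is why the one-sided form of the contraction, together with the sharp range bound $|f - f^*| \le 2B$, is essential.
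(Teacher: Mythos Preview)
Your proposal is correct and is essentially the paper's own argument: the basic inequality followed by symmetrization of the population--empirical gap produces exactly the same two terms the paper isolates via its ``future noise-free data'' device, namely a Gaussian complexity term $2\,\Expect[\sup_f \tfrac{1}{n}\sum_i \varepsilon_i f(X_i)]$ and a Rademacher term $2\,\Expect[\sup_f \tfrac{1}{n}\sum_i \xi_i (f-f^*)^2(X_i)]$, and the paper's ``minor modifications to handle squares'' is precisely your $4B$-contraction step. The packaging differs slightly but the decomposition, the contraction argument, and the resulting constants coincide.
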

\begin{remark}
The rate in \eqref{eq:rate} also holds for other 
error distributions with sub-Gaussian tails. 
\end{remark}
\begin{remark}
Since the risk bound \eqref{eq:rate} has the form \eqref{eq:riskform},  a corollary of Theorem \ref{thm:mainrisk} and Corollary \ref{cor:entropy} is that the metric entropy of $\calF_{L,V}$ is not more than a constant multiple of $ V^2 (L \log 2 + \log(2d)) /\epsilon^2 + 2\log 2 $ for all $ \epsilon > 0 $.
\end{remark}
\begin{proof}
We can control the risk of the least squares estimator by bounding the expected supremum of the averages of
$$
L(Y'_i,f(X'_i)) - L(Y_i,f(X_i)),
$$
on independent future data $ (X'_i, Y'_i) $, in which the $X'_i$ have the same distribution as the $X_i$. Here we define $L(Y_i,f(X_i) = (Y_i-f(X_i))^2 - (Y_i-f^*(X_i))^2 $, which includes subtraction of the square of the noise $ \varepsilon_i = Y_i-f^*(X_i) $. 
Moreover, for the future $Y'_i$, there is no harm to let it be noise free $Y'_i = f^*(X'_i)$. Then the empirical discrepancy cleanly separates into the sum of a Gaussian and Rademacher complexity (normalized by $ 1/n $), viz., for any $ \tilde f $,
\begin{align*}
& \frac{1}{n}\sum_{i=1}^nE[L(Y'_i,\tilde f(X'_i)) - L(Y_i, \tilde f(X_i))] \\ & \leq 2E[\sup_{f\in\calF_{L, V, B}}\frac{1}{n}\sum_{i=1}^n \varepsilon_i (f(X_i)-f^*(X_i))] \\ & \qquad + E[\sup_{f\in\calF_{L, V, B}}\frac{1}{n}\sum_{i=1}^n \xi_i ((f(X'_i)-f^*(X'_i))^2-(f(X_i)-f^*(X_i))^2)] \\
& \leq 2E[\sup_{f\in\calF_{L, V, B}}\frac{1}{n}\sum_{i=1}^n \varepsilon_i f(X_i)] + 2E[\sup_{f\in\calF_{L, V, B}}\frac{1}{n}\sum_{i=1}^n \xi_i (f(X_i)-f^*(X_i))^2].
\end{align*}
where the expectations are with respect to Gaussian $ \varepsilon_i $, Rademacher $ \xi_i $, $ X_i $, and $ X'_i $. In particular, for the least squares estimator $ \hat f $ (minimizing $ f \mapsto \frac{1}{n}\sum_{i=1}^nL(Y_i,f(X_i)) $ for $ f \in \calF_{L, V, B} $), 
\begin{align*}
\frac{1}{n}\sum_{i=1}^nE[L(Y'_i,\hat f(X'_i)) - L(Y_i, f^*(X_i))] & \leq \frac{1}{n}\sum_{i=1}^nE[L(Y'_i,\hat f(X'_i)) - L(Y_i, \hat f(X_i))] \\ & \leq 2E[\sup_{f\in\calF_{L, V, B}}\frac{1}{n}\sum_{i=1}^n \varepsilon_i f(X_i)] \\ & \qquad + 2E[\sup_{f\in\calF_{L, V, B}}\frac{1}{n}\sum_{i=1}^n \xi_i (f(X_i)-f^*(X_i))^2],
\end{align*}
where the first inequality above arises from the fact that $ \hat f $ has smaller empirical risk than $ f^* $. Then, using $ \frac{1}{n}\sum_{i=1}^nE[L(Y'_i,\hat f(X'_i))] = E[\|\hat f - f^*\|^2] $ and combining the Gaussian and Rademacher complexity upper bounds from Corollary \ref{cor:complexity} (with minor modifications to handle squares of functions from $ \calF_{L, V} $ bounded by $ B $), we have
$$
E[\|\hat f - f^*\|^2] \leq 2V\sigma \sqrt{\frac{2(L\log(2)+\log (2d))}{n}} + 8VB\sqrt{\frac{2(L\log(2)+\log (2d))}{n}},
$$
which combine to be $ 2V(\sigma+4B)\sqrt{\frac{2(L\log(2)+\log (2d))}{n}} $. This completes the proof.
\end{proof}
\subsection{Adaptive estimation}
Flexible regression models are built by combining simple functional forms, which here consist of repeated compositions and linear transformations of nonlinear functions. In fitting such models to data in a training sample, there is a role for empirical performance criteria such as penalized squared error in selecting components of the function from a given library of candidate terms. With suitable penalty, optimizing the criterion adapts the total weights $(W_1, \dots, W_L)$ of combination or the number $ d_{\ell} $ of units $ z_{j_{\ell}} $ as well as the subset of which units to include in each layer. 
In practice, one does not know the ``true'' $ V(f^*) $ for the regression function $ f^* $, which makes it difficult to select an upper bound $ V $ on $ V(f) $ for functions $ f $ in $ \calF_{L, V} $. In fact, $ f^* $ may not even be equal to a deep neural network and therefore empirical risk minimization over a finite covering of $ \calF_{L, V} $ is inconceivable unless the model is well-specified.

Motivated by the previous concerns, an important question is whether the same rate in \eqref{eq:rate}, derived from nonadaptive estimators, is available from an adaptive risk bound (which allows for a more data-dependent and agnostic criterion for fits of $ f^* $) for estimators that minimize a penalized empirical risk over a finite $\epsilon_n$-covering $ \calU_{\calF_{L, \infty}}(\epsilon_n) $ of $ \calF_{L, \infty} $, where $ \calF_{L, \infty} = \bigcup_{V > 0}\calF_{L, V} $. That is, are adaptive risk bounds available for estimators $ \hat{f} $ with penalty defined through the ``smallest" variation $ V(f^*) $ among all representations of a network $ f^* $, i.e.,
$$
\frac{1}{n}\sum_{i=1}^n(Y_i - \hat{f}(X_i))^2 + V(\hat{f})\lambda_n \leq \inf_{f\in \calU_{\calF_{L, \infty}}(\epsilon_n)}\left\{ \frac{1}{n}\sum_{i=1}^n(Y_i - f(X_i))^2 + V(f)\lambda_n \right\},
$$
where $ \lambda_n \asymp \left(\frac{L+\log(d)}{n}\right)^{1/2} $ (whose choice is inspired by \eqref{eq:rate})? Indeed, it can be shown using the metric entropy calculation from Corollary \ref{cor:entropy} and techniques from \citep{Klusowski+} that
\begin{equation} \label{eq:adaptiverisk}
E[\|f^*-\hat{f}\|^2] \leq \inf_{f \in\calF_{L, \infty}}\left\{ \|f - f^*\|^2 + V(f)\lambda_n \right\},
\end{equation}
for such penalized estimation schemes, which cleanly expresses the approximation and model complexity tradeoff.

Penalties of similar flavor have already been established for single hidden layer networks, corresponding to $ L = 2 $. For example, using approximation results from \citep{Barron1991}, it was shown in \citep{Barron1994} that \eqref{eq:adaptiverisk} holds for a penalty $ \lambda_n \|W_1\|_1 $ with $ \lambda_n \asymp (d(\log(n/d))/n)^{1/2} $. Furthermore, recent work by \citep{E+} by we has shown that one can additionally control the size of the hidden layer parameters $ w_{j_1,j_2} $ through a penalty $ \lambda_n \|W_1W_2\|_1 $ with $ \lambda_n \asymp ((\log(d))/n)^{1/2} $. Note that \citep[Theorem 4]{Barron+2018} shows that these risk bounds for the single hidden layer case are essentially optimal in the sense that the minimax risk is lower bounded by $ v^{1/4}((\log(d))/n)^{1/2} $.

\bibliographystyle{plainnat}
\bibliography{ref}









\end{document}